\newcommand{\etal}{et al.}
\newcommand{\eg}{e.g.}
\newcommand{\ie}{i.e.}
\newcommand*{\addFileDependency}[1]{
  \typeout{(#1)}
  \@addtofilelist{#1}
  \IfFileExists{#1}{}{\typeout{No file #1.}}
}
\begin{document}

\pagestyle{headings}
\mainmatter

\title{Switchable Temporal Propagation Network} 

\titlerunning{Switchable Temporal Propagation Network}

\authorrunning{Sifei Liu \etal}

\author{Sifei Liu$^{1}$, Guangyu Zhong$^{1,3}$, Shalini De Melloç, Jinwei Gu$^{1}$ \\
Varun Jampani$^{1}$, Ming-Hsuan Yang$^{2}$, Jan Kautz$^{1}$}
\institute{$^{1}$NVIDIA, $^{2}$UC Merced, $^{3}$Dalian University of Technology}

\maketitle


\vspace{-1em}
\begin{abstract}
	%
	%
	%
	%
	%
Videos contain highly redundant information between frames.
Such redundancy has been extensively studied in video compression and encoding, but is less explored for more advanced video processing.
In this paper, we propose a learnable unified framework for propagating a variety of visual properties of video images, including but not limited to color, high dynamic range (HDR), and segmentation information, where the properties are available for only a few key-frames.
Our approach is based on a temporal propagation network (TPN), which models the transition-related affinity between a pair of frames in a purely data-driven manner.
We theoretically prove two essential factors for TPN: 
(a) by regularizing the global transformation matrix as orthogonal, the ``style energy'' of the property can be well preserved during propagation;
(b) such regularization can be achieved by the proposed switchable TPN with bi-directional training on pairs of frames.
We apply the switchable TPN to three tasks: colorizing a gray-scale video based on a few color key-frames, generating an HDR video from a low dynamic range (LDR) video and a few HDR frames, and propagating a segmentation mask from the first frame in videos. 
Experimental results show that our approach is significantly more accurate and efficient than the state-of-the-art methods.
All the codes and models will be released to the public.

\vspace{-1em}
\keywords{temporal propagation network, video propagation}
\end{abstract}


%
\vspace{-2em}
\section{Introduction}
\label{sec:intro}
\vspace{-0.5em}

Videos contain highly redundant information between frames.
Consider a pair of consecutive frames randomly sampled from a video, it is likely that they are similar in terms of appearance, structure and content in most regions.
Such redundancy has been extensively studied in video compression to reduce the storage and speedup the transmission of videos, but is less explored for more advanced video processing.
%
%
%
A number of recent algorithms,  such as 
optical-flow based warping~\cite{gadde2017semantic}, similarity-guided filtering~\cite{he2013guided,levin2004colorization}
and the bilateral CNN model~\cite{Jampani17VPN}, explore   
the local relationships between frames to propagate information.
These methods model the similarity among pixels, regions or frames from either hand-crafted pixel-level features (e.g., pixel intensities and locations) or apparent motion (e.g., optical flow). 
They have several potential issues: 
(a) the designed similarity may not faithfully reflect the image structure, and
(b) such similarity may not express the high-level pairwise relationships between frames, e.g., for propagating a segmentation mask in the semantic domain.

\begin{figure*}[h]
	\centering
	\includegraphics[width=0.99\linewidth]{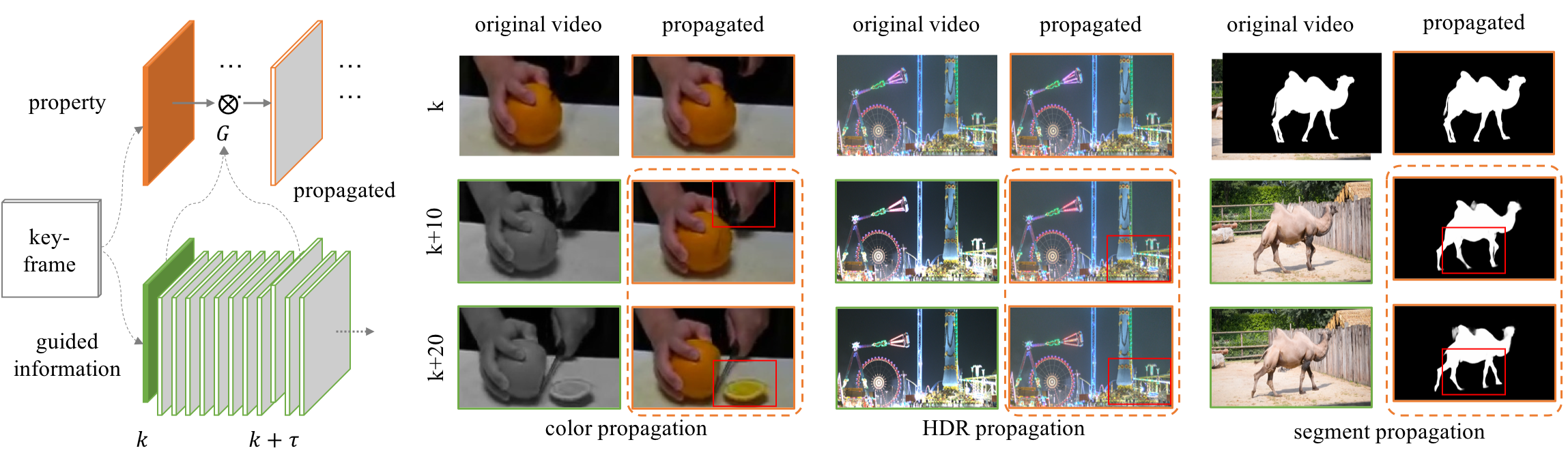}
	\caption{\footnotesize 
	We propose the TPN model that takes a known property map (e.g., color, HDR, segmentation mask) from a key-frame ($k$), and transform it to a nearby frame ($k+\tau$), denoted by ``propagated''.
	The propagation is guided by some known information (e.g., lightness, LDR, RGB image), via a learnable transformation matrix $G$.
	We show three tasks on the right size, where $k$ denotes the key-frame for which the ground-truth property is provided. The orange bboxes shows the propagated results by our algorithm, which are guided by the information from the left columns. 
	We highlight (red bboxes) the regions where the proposed method successfully deals with large transitions, or preserves fine details for the predicted property.  
	Zoom-in to see details.
		%
		%
		%
	}
	\label{figure:intro}\vspace{-9mm}
\end{figure*}

In this paper, we develop a temporal propagation network (TPN)  to explicitly learn pixel-level similarity between a pair of frames (see Fig.~\ref{figure:intro}).
It contains a propagation module that transfers a property (e.g., color) of one frame to a nearby frame through a global, linear transformation matrix which is learned with a CNN from any available guidance information (e.g., lightness).

We enforce two principles when learning propagation in the temporal domain: (a) \textbf{bi-directionality}, i.e., the propagation between a pair of frames should be invertible, and (b) \textbf{consistency}, i.e., the ``style energy'' (e.g., the global saturation of color) of the target property should be preserved during propagation.
We theoretically prove that:
enforcing both principles in the TPN is \textit{equivalent}
to ensuring that the transformation matrix is orthogonal with respect to each propagation direction.
This theoretical result allows us to implement TPN as a novel, special network architecture --- the switchable TPN (see Fig.~\ref{figure:stpn}) --- without explicitly solving the transformation matrix. 
It uses bi-directional training for a pair of frames in the propagation module, 
which is guided by switched output maps from the guidance CNN network.
%
%
Experiments demonstrate that the proposed architecture is effective in preserving the style energy even between two widely separated frames.

We validate the proposed model for three propagation tasks:
(a) video colorization from a few color key-frames and a grayscale video (Section~\ref{sec:color}).
With such temporal propagation, the workload of black-and-white video colorization can be largely reduced by only annotating a small number of key-frames.
(b) HDR video reconstruction from an LDR video with a few HDR key-frames (Section~\ref{sec:hdr}).
This is a new way for HDR video capture, where the whole video can be reconstructed with a few HDR frames provided.
(c) video segmentation when only the segmentation mask of the target in the first frame is provided.
We show that even without any image-based segmentation model, the proposed method can achieve comparable performance to the state-of-the-art algorithm.
All of these tasks reveal that video properties between temporally close frames are highly redundant, and that the relationships between them can be learned from corresponding guidance information.
Compared to the existing methods, and aside from the novel architecture, 
our proposed method also has the following advantages:
\vspace{-3mm}
\begin{itemize}
    \item \textbf{High accuracy.} 
    Compared to prior work~\cite{Jampani17VPN,Eilertsen17HDR},
    our TPN approach achieves significant improvement in video quality. More importantly, the switchable TPN preserves the style energy in the generated sequences significantly better than the network without the switchable structure. 
    \item \textbf{High efficiency.} Our method runs in real-time on a single Titan XP GPU for all the three tasks, which is about 30x to 50x faster over prior work~\cite{Jampani17VPN,Eilertsen17HDR} (see Table~\ref{tab:runtime}). Moreover, our model does not require sequential processing of video frames, \ie, all video frames  can be processed in parallel, which can further improve its efficiency.
\end{itemize}

\vspace{-5mm}
\section{Related Work and Problem Context}
\label{sec:related}

\vspace{-2mm}
\paragraph{\bf Modeling affinity for pixel propagation.}
Affinity is a generic measure of closeness between two pixels/entities 
and is widely used in vision tasks at all levels. 
Well-modeled affinity reveals how to propagate information from the known pixels to the unknown ones.
Most prior methods design affinity measures based on simple, intuitive functions~\cite{he2013guided,levin2008closed,levin2004colorization}. Recently, a deep CNN model is proposed to learn task-dependent affinity metric \cite{Liu17SPN} by modeling the propagation of pixels as an image diffusion process. 
%
%
%
While \cite{Liu17SPN} is limited to the spatial propagation of pixels for image segmentation, its high-level idea inspires us to learn pixel affinity in other domains via CNNs, e.g., in video sequences as proposed in this work.
%

Considerably less attention has been paid to developing methods for propagating temporal information across video frames.
Jampani~\etal~\cite{Jampani17VPN} propose to propagate video segmentation and color information by embedding pixels into a bilateral space~\cite{jampanicvpr2016} defined based on spatial, temporal and color information.
While pixels of the same region from different frames can be closer in such a space, it requires several previous frames stacked together to generate a new frame, which results in a high computational cost.
Our proposed algorithm is different in that it explicitly \emph{learns} the pixel affinity that describes the task-specific temporal frame transitions, instead of manually defining a similarity measure.

\vspace{-3mm}
\paragraph{\bf Colorizing grayscale videos}
Colorization in images and videos is achieved via an interactive procedure in~\cite{levin2004colorization}, which propagates manually annotated strokes spatially within or across frames, based on a matting Laplacian matrix and with manually defined similarities.
Recently, several methods based on CNNs have been developed to 
colorize pixels in images with fully-automatic or sparsely annotated colors~\cite{zhang2016colorful,zhang2017real}.
Due to the multinomial natural of color pixels~\cite{zhang2017real}, the interactive procedure usually gives better results. 
%
While interactive methods can be employed for single images, it is not practical to apply them for all frames of a monochrome video. 
In this work, we suggest a more plausible approach by using a few color key-frames to propagate visual information to all frames in between.
To this end, colorizing a full video can be easily achieved by annotating at sparse locations in only a few key-frames, as described in Section~\ref{sec:color}.

\vspace{-3mm}
\paragraph{\bf Video propagation for HDR imaging}
Most consumer-grade digital cameras have limited dynamic range and often capture
images with under/over-exposed regions, which not only 
degrades the quality of the captured photographs and videos, but also  
impairs the performance of computer vision tasks such as tracking and 
recognition in numerous applications.  
A common way to achieve HDR 
imaging is to capture a stack of LDR images with different exposures
and fuse them together~\cite{Debevec97HDR,Reinhard10HDR}. 
Such an approach assumes static scenes and thus requires deghosting techniques~\cite{Hu13HDR,Oh15HDR,Gallo15HDR} 
to remove artifacts. 
Capturing HDR videos for dynamic scenes poses a more challenging problem.
Prior methods to create HDR videos are mainly based on hardwares that either alternate exposures between
frames~\cite{Kang03HDR,Kalantari13HDR}, or use multiple cameras~\cite{Tocci11HDR} or specialized
image sensors with pixel-wise exposure controls~\cite{Nayar00HDR,Gu10CRSP}.
A few recent methods based on deep models have been developed for HDR imaging.
Kalantari \etal~\cite{Kalantari17HDR} use a deep neural network to align multiple LDR images into a single HDR image for dynamic scenes.  
Zhang \etal~\cite{Zhang17HDR} develop an auto-encoder network to predict a single HDR
panorama from a single exposed LDR image for image-based rendering.
In addition, Eilertsen~\etal~\cite{Eilertsen17HDR} propose a similar network for HDR reconstruction from a single LDR input image, which primarily focuses on recovering details in the high intensity saturated regions. 

In this paper, we apply the TPN for HDR video reconstruction from a LDR video.
Given a few HDR frames (or photos) and an LDR video, the TPN propagates the scene radiance information from the HDR frames (or photos) to the remaining frames in the LDR video. 
Note that unlike all the existing single LDR-based methods~\cite{Zhang17HDR,Eilertsen17HDR}, which hallucinate
the missing HDR details in images, we focus on propagating the HDR information from the input few HDR images to neighboring LDR frames.
It provides an alternative solution for efficient, low cost HDR video reconstruction.

\vspace{-3mm}
\section{Proposed Algorithm}
\label{sec:method}\vspace{-2mm}
We exploit the redundancy in videos, and propose the TPN for learning affinity and propagating target properties between frames. Take the video colorization as an example.
Given an old black-and-white movie with a few  key-frames colorized by artists, can we automatically colorize the entire movie?  
This problem can be equivalently reformulated as propagating a target property (\ie, color) based on the affinity of some features (\eg, lightness) between two frames. 
Intuitively, this is feasible because (1) videos have redundancy over time --- nearby frames tend to have similar appearance, and (2) the pixel correlation between two frames in the lightness domain is often consistent with that in the color domain. 
%
%
%
%

In this work, we model the propagation of a target property (\eg, color) between two frames as a \textit{linear transformation},
\begin{equation}
\label{eq:linear}
    U_{t} =  GU_k,
\end{equation}
where $U_k\in \mathcal{R}^{n^2\times 1}$ and $U_t\in \mathcal{R}^{n^2\times 1}$ are the vectorized version of the $n\times n$ property maps of a key-frame and a nearby frame, and $G\in \mathcal{R}^{n^2\times n^2}$ is the transformation matrix to be estimated\footnote{For property maps with multiple channels $n\times n\times c$, we treat each channel separately.}. 
Suppose we observe some features of the two frames (\eg, lightness) $V_k$ and $V_t$, the transformation matrix $G$ is thus a function of $V_k$ and $V_t$,
\begin{equation}
\label{eq:guidance}
    G = g(\theta, V_k, V_t).
\end{equation}

%
%
The matrix $G$ should be dense in order to model any type of pixel transition in a global scope, but $G$ should also be concise for efficient estimation and propagation. 
%
%
%
In Section~\ref{sec:spn}, we propose a solution, called the basic TPN,  by formulating the linear transformation $G$ as an image diffusion process similar to~\cite{Liu17SPN}.
%
%
Following that, in Section~\ref{sec:cycle}, we introduce the key part of our work, the switchable TPN, which enforces the bi-directionality and the style consistency for temporal propagation. 
%
We prove that enforcing these two principles is equivalent to ensure the transformation matrix $G$ to be orthogonal, which in turn can be easily implemented by equipping an ordinary temporal propagation network with a switchable structure.

\begin{figure*}[t]
	\centering
	\includegraphics[width=0.90\linewidth]{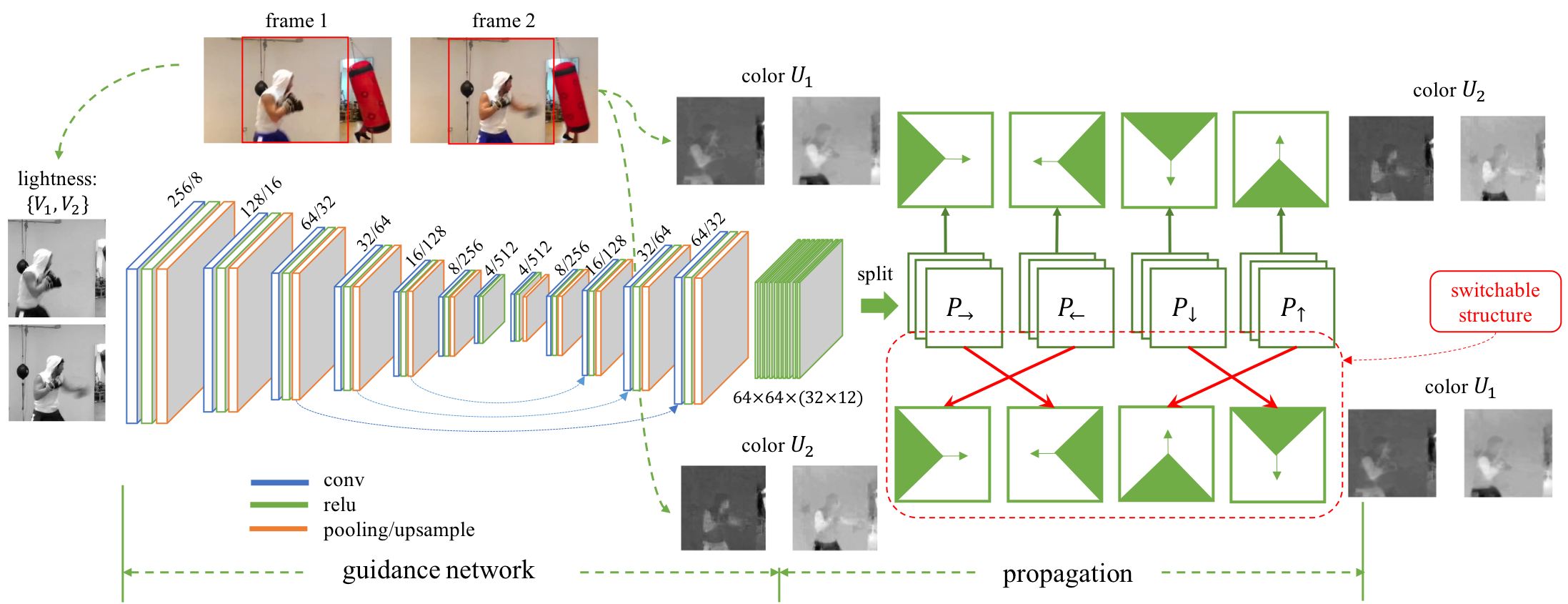}
	\caption{\footnotesize The architectures of a switchable TPN,
		which contains two propagation modules for bi-directional training.
		We specifically use a red-dashed box to denote the switchable structure
		In the reversed pair, the output channels $\left\lbrace P\right\rbrace $ are switched (red) for horizontal and vertical propagation.
	}
	\label{figure:stpn}\vspace{-5mm}
\end{figure*}

\vspace{-3mm}
\subsection{Learning  Pixel Transitions via the Basic TPN}\vspace{-2mm}
\label{sec:spn}
Directly learning the transformation matrix $G$ via a CNN is prohibited, since $G$ has a huge dimension (\eg, $n^2\times n^2$). Instead, inspired by the recent work~\cite{Liu17SPN}, we formulate the transformation as a diffusion process, and implement it efficiently by propagating information along each row and each column in an image linearly. 
Suppose we keep only $k=3$ nearest neighbors from a previous column (row) during the propagation, and we perform the propagation in $d=4$ directions, the total number of parameters to be estimated is significantly reduced from $n^2\times n^2$ to $n^2\times k \times d$ (see example of Fig.~\ref{figure:stpn}).

\vspace{-2mm}
\paragraph{\bf Linear transformation as a diffusion process.}
The diffusion process from frame $k$ to frame $t$ can be expressed with a partial differential equation (PDE) in the discrete form as:
\begin{equation}
\triangledown U = U_t-U_k= -LU_k = (A-D)U_k,
\label{eq:pde} 
\end{equation}
where $L=D-A$ is the Laplacian matrix, $D$ is the diagonal degree matrix and $A$ is the affinity matrix.
%
%
In our case, this represents the propagation of the property map $U$ over time.
%
\eqref{eq:pde} can be re-written as:
\begin{equation}
U_t =  \left( I-D+A\right)U_k = GU_k,
\label{eq:proof3} 
\end{equation}
where $G$ is the transformation matrix between the two states, as defined in~\eqref{eq:linear}, and $I$ is an identity matrix.
We can see that: (a) $G$ is entirely determined by the affinity matrix $A$, which quantifies the dense pairwise relationships between the pixels of the two property maps; (b) to model a diffusion process, $G$ should satisfy the property that $L$ is a standard Laplacian matrix.

%
%
%
%
%
%
\vspace{-2mm}
\paragraph{\bf Linear propagation network.}
With a propagation structure, the diffusion between frames can be implemented as a linear propagation along the rows or columns of an image.
Here we briefly show their equivalence.
Following~\cite{Liu17SPN}, we take the left-to-right spatial propagation operation as an example:
%
\begin{equation}
y_{i} = \left( I-d_i\right) x_i + w_i y_{i-1},\hspace{1em}i\in [2,n],
\label{eq:col-trans}
\end{equation}
where $x\in U_k$ and $y\in U_t$, and the $n\times1$ vectors $\left\lbrace x_i, y_i\right\rbrace $ represent the $i^{th}$ columns before and after propagation with an initial condition of $y_1=x_1$, and $w_i$ is the spatially variant $n\times n$ sub-matrix.
Here, $I$ is the identity matrix and $d_i$ is a diagonal matrix, whose $t^{th}$ element is the sum of all the elements of the $t^{th}$ row of $w_i$ as $d_i(t,t) = \sum_{j=1, j\neq t}^{n}w_i(j,t).$
Similar to \cite{Liu17SPN}, through (a) expanding its recursive term, and (b) concatenating all the rows/columns as a vectorized map, it is easy to prove that~\eqref{eq:col-trans} is equivalent to the global transformation $G$ between $U_k$ and $U_t$, 
where each entry is the multiplication of several spatially variant $w_i$ matrices. Please refer to the appendix for a complete proof.

%
%
%
%
Essentially, instead of predicting all the entries in $G$ as independent variables,
the propagation structure transfers the problem into learning each sub-matrix $w_i$ in~\eqref{eq:col-trans}, which significantly reduces the output dimensions.

\vspace{-2mm}
\paragraph{\bf Learning the sub-matrix $\left\lbrace w_i\right\rbrace $.} 
We adopt an independent deep CNN, namely the guidance network, to output all the sub-matrices $w_i$.
Note that the propagation in \eqref{eq:linear} is carried out for $d=4$ directions independently, as shown in Fig.~\ref{figure:stpn}.
For each direction, it takes a pair of images $\left\lbrace V_{k}, V_{t}\right\rbrace $ as its input, and outputs a feature map $P$ that has the same spatial size as $U$ (see Fig.~\ref{figure:stpn}).
Each pixel in the feature map $p_{i,j}$ contains all the values of the $j$th row in $w_i$, 
which describes a local relation between the adjacent columns, but results in a global connection in $G$ though the propagation structure.
Similar to~\cite{Liu17SPN}, we keep only $k=3$ nearest neighbors from the previous column, which results in $w_i$ being a tridiagonal matrix.
%
For each direction, we have a guidance network $g(\theta, V_{k}, V_{t})$, which produces an output $P$ of size $n\times n\times k$. Thus, a total of $n\times n \times k \times d$ parameters are used to implement the transformation matrix $G$. 
%
%
Such a structure significantly compresses the guidance network while still ensuring that the corresponding $G$ is a dense matrix that can describe global and dense pairwise relationships between a pair of frames.

\vspace{-2mm}
\subsection{Preserving Consistency via Switchable TPN}\vspace{-2mm}
\label{sec:cycle}
In this part, we show that there are two unique characteristics of propagation in the temporal domain, which do not exist for propagation in the spatial domain~\cite{Liu17SPN}.
First, temporal propagation is bi-directional for two frames, 
\ie, a network capable of transforming a frame $U_1$ into a frame $U_2$, should also be able to transform from $U_2$ to $U_1$, with a corresponding reversed ordering of inputs to the guidance network.
Second, during propagation, the overall ``style'' of the propagated property across the image should stay constant between frames,
\eg, during color propagation, the color saturation of all frames within a short video clip should be similar. 
We call this feature ``consistency property".
As shown below, we prove that enforcing the bi-directionality and the consistency is equivalent to ensure the transformation matrix $G$ to be orthogonal, which in turn can be easily implemented by equipping an ordinary temporal propagation network with a switchable structure.
%
%
\vspace{-2mm}
\paragraph{\bf Bi-directionality of TPN.}
Different from some high-level video-based tasks (\eg, action recognition) which may involve causal relations, low-level and mid-level properties in nearby video frames (\eg, color, HDR) often do not have a causal relationship. 
Hence, temporal propagation of these properties can often be switched in direction without breaking the process.
%
Therefore, given a diffusion model $G$ and a pair of frames $\left\lbrace U_1, U_2\right\rbrace$, we have a pair of equations:
\begin{eqnarray}
\vspace{-1mm}
U_2 = G_{1\rightarrow2}U_1, \quad
U_1 = G_{2\rightarrow1}U_2,
\label{eq:bi}
\vspace{-1mm}
\end{eqnarray}
\noindent where the arrow denotes the propagation direction.
%
%
The bi-directionality property implies that reversing the roles of the two frames as inputs by $\left\lbrace V_1, V_2\right\rbrace \rightarrow \left\lbrace V_2, V_1\right\rbrace$, and the corresponding supervision signals to the network corresponds to applying an inverse transformation matrix $G_{2\rightarrow1} = G_{1\rightarrow2}^{-1}$.
\vspace{-2mm}
\paragraph{\bf Style preservation in sequences.}
Style consistency refers to whether the generated frames can maintain similar chromatic properties or brightness when propagating color or HDR information, 
which is important for producing high-quality videos without the property vanishing over time.
In this work, we ensure such global temporal consistency by minimizing the difference in style loss of the propagated property for the two frames.
Style loss has been intensively used in style transfer~\cite{GatysEB15style}, but has not yet been used for regularizing temporal propagation.
In our work, we represent the style by the Gram matrix, which is proportional to the un-centered covariance of the property map.
The style loss is the squared Frobenius norm of the difference between the Gram matrices of the key-frame and the succeeding frame:
%
\begin{theorem}
	By regularizing the style loss we have the following optimization w.r.t. the guidance network:
	\vspace{-2mm}
	\begin{eqnarray}
	\min& \frac{1}{N}\parallel U_1^{\top}U_1 - U_2^{\top}U_2\parallel_F^{2} \label{eq:opt} \\
	s.t.& U_2 = GU_1 .
	\label{eq:style}
	\vspace{-2mm}
	\end{eqnarray}
	The optimal solution is reached when $G$ is orthogonal.
	\label{theo1}
\end{theorem}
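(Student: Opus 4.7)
The plan is to reduce the problem to a quadratic form in $G$ by substituting the constraint $U_2 = GU_1$ directly into the objective. Doing so gives
\begin{equation*}
\tfrac{1}{N}\|U_1^{\top}U_1 - U_2^{\top}U_2\|_F^{2} = \tfrac{1}{N}\|U_1^{\top}(I-G^{\top}G)U_1\|_F^{2},
\end{equation*}
which is visibly non-negative and attains the value zero whenever $G^{\top}G = I$. This immediately handles the easy direction: any orthogonal $G$ is a minimizer, and the minimum value of the objective is zero.

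Next I would address the converse. Setting the symmetric matrix $M = I - G^{\top}G$, the objective vanishes only if $U_1^{\top} M U_1 = 0$. The subtle point is that for a single property map $U_1$ there are many non-orthogonal $G$ that happen to annihilate it; the structural claim that $G$ itself is orthogonal must come from the fact that $G = g(\theta, V_1, V_2)$ is produced by the guidance network and is trained to minimize the expected style loss over a distribution of frame pairs. Hence $U_1^{\top} M U_1 = 0$ must hold simultaneously for a rich family of $U_1$, and a standard polarization/genericity argument on the symmetric matrix $M$ then forces $M = 0$, i.e., $G^{\top}G = I$.

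Putting the two directions together shows that the style-loss objective is minimized exactly when $G$ is orthogonal. I expect the main obstacle to lie in the converse step: the theorem as stated conflates a pointwise property (zero loss for a particular $U_1$) with a global structural property (orthogonality of $G$ uniformly in $U_1$). The cleanest way to bridge this gap is to exploit that $G$ depends only on the guidance channels $(V_1, V_2)$ rather than on the propagated property $U$, so the network must produce a single $G$ that zeros the style loss across all admissible $U_1$ -- and the only such $G$ is orthogonal.
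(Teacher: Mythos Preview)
Your argument is correct and follows the same skeleton as the paper's own proof: substitute the constraint, observe the objective is non-negative, and note that it vanishes when $G^{\top}G=I$. The paper's proof is in fact terser than yours---it simply asserts that $U_1^{\top}U_1=U_2^{\top}U_2$ together with $U_2=GU_1$ gives $G^{\top}G=I$, without addressing the point you raise about a single $U_1$ not pinning down $G$; your polarization/genericity discussion for the converse direction is a welcome strengthening rather than a deviation.
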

\begin{proof}
Since the function~\eqref{eq:opt} is non-negative, the minimum is reached when $U_1^{\top}U_1=U_2^{\top}U_2$.
Combining it with~\eqref{eq:style} we have $G^{\top}G=I$.
\end{proof}
Given that $G$ is orthogonal, the $G_{2\rightarrow1}$ in \eqref{eq:bi} can be replaced by $G_{1\rightarrow2}^{\top}$, which equals to $G_{1\rightarrow2}^{-1}$.
Therefore, the bi-directionality propagation can be represented via a pair of transformation matrices that are \textit{transposed} w.r.t each other.
In the following part, we show how to enforce this property for the transformation matrix $G$ in the linear propagation network via a special network architecture.
Note that in our implementation, even though we use the channel-wise propagation described in Section.~\ref{sec:spn}, where the $U^{\top}U$ actually reduces to an uncentered variance, the conclusions of Theorem~\ref{theo1} still hold. 

\vspace{-2mm}
\paragraph{\bf A switchable propagation network.}
The linear transformation matrix $G$ has an important property:
since the propagation is directed, the transformation matrix $G$ is a triangular matrix.
Consider the two directions along the horizontal axis (\ie, $\rightarrow,  \leftarrow$) in Fig.~\ref{figure:stpn}. $G$ is an upper triangular matrix for a particular direction (\eg, $\rightarrow$), while it is lower triangular for the opposite one (\eg, $\leftarrow$). 
Suppose $P_\rightarrow$ and $P_\leftarrow$ are the output maps of the guidance network w.r.t. these two opposite directions.
This means that the transformation matrix, which is lower-triangular for propagation in the left-to-right direction, becomes upper-triangular for the opposite direction of propagation.
Since the upper-triangular matrix: (a) corresponds to propagating in the right-to-left direction, and (b) contains the same set of weight sub-matrices,
\emph{switching the CNN output channels w.r.t. the opposite directions $P_\rightarrow$ and $P_\leftarrow$ is equivalent to transposing the transformation matrix $G$ in the TPN.} This fact is exploited as a regularization structure (see the red bbox in Fig.~\ref{figure:stpn}), as well as an additional regularization loss term in \eqref{eq:loss} during training.

To summarize, the switchable structure of the TPN is derived from the two principles (\ie, the bi-directionality and the style consistency) for temporal propagation and
the fact that the matrix $G$ is triangular due to the specific form of propagation.
Note that \cite{Liu17SPN} did not address the triangulation of the matrix and thus were limited to propagation in the spatial domain only.
We show the switchable TPN (STPN) largely improve performance over the basic TPN, with no computational overhead at inference time.

\vspace{-3mm}
\section{Network Implementation}
\vspace{-2mm} 
We provide the network implementation details shared by color, HDR and segmentation mask propagation, which are proposed in this work. These settings can be potentially generalized to other properties of videos as well.
\vspace{-2mm}
\paragraph{\bf The basic TPN.}
The basic TPN contains two separate branches: (a) a deep CNN for the guidance network, which takes as input the provided information $\left\lbrace V_1, V_2\right\rbrace$ from a pair of frames, and outputs all elements ($P$) that constitute the state transformation matrix $G$, and (b) a linear propagation module that takes the property map of one frame $U_1$ and outputs $U_2$.
It also takes as input $\lbrace{P\rbrace}$ the propagation coefficients following the formulation 
of~\eqref{eq:col-trans}, where $\lbrace{P\rbrace}$ contains $kd$ channels ($k=3$ connections for each pixel per direction, and $d=4$ directions in total). 
$\left\lbrace V, U, P \right\rbrace$  have the same spatial size according to~\eqref{eq:col-trans}.
We use node-wise max-pooling~\cite{liu2016learning,Liu17SPN} to integrate the hidden layers and to obtain the final propagation result.
All submodules are differentiable and jointly trained using stochastic gradient descent (SGD), with the base learning rate of $10^{-5}$.

\vspace{-2mm}
\paragraph{\bf The switchable TPN.}
Fig.~\ref{figure:stpn} shows how the switchable structure of the TPN is exploited as an additional regularization loss term during training. 
For each pair $(U_1, U_2)$ of the training data, the first term in \eqref{eq:loss} shows the regular supervised loss between the network estimation $\hat{U}_2$ and the groundtruth $U_2$. In addition, as shown in Fig.~\ref{figure:stpn}(b), since we want to enforce the bi-directionality and the style consistency in the switchable TPN, the same network should be able to propagate from $U_2$ back to $U_1$ by simply switching the channels of the output of the guidance networks, \ie, 
switching the channels of $\left\lbrace P\rightarrow, P\leftarrow\right\rbrace $ and $\left\lbrace P\downarrow, P\uparrow\right\rbrace$ for propagating information in the opposite direction. This will form the second loss term in \eqref{eq:loss}, which serves as a regularization (weighted by $\lambda$) during the training.
%
%
%
%
%
%
%
We set $\lambda=0.1$ for all the experiments in this paper.
%
\begin{eqnarray}
	L(U_1, \hat{U}_1, U_2, \hat{U}_2) = 
	\left\| U_2(i)-\hat{U}_2(i)\right\|^2 + \lambda \left\| U_1(i)-\hat{U}_1(i)\right\|^2 .
	\vspace{-1mm}
	\label{eq:loss}
\end{eqnarray}
At inference time, the switchable TPN reduces to the basic TPN introduced in Section~\ref{sec:spn} and therefore does not have any extra computational expense.

\begin{figure}[t]
	\centering
	\includegraphics[width=0.99\linewidth]{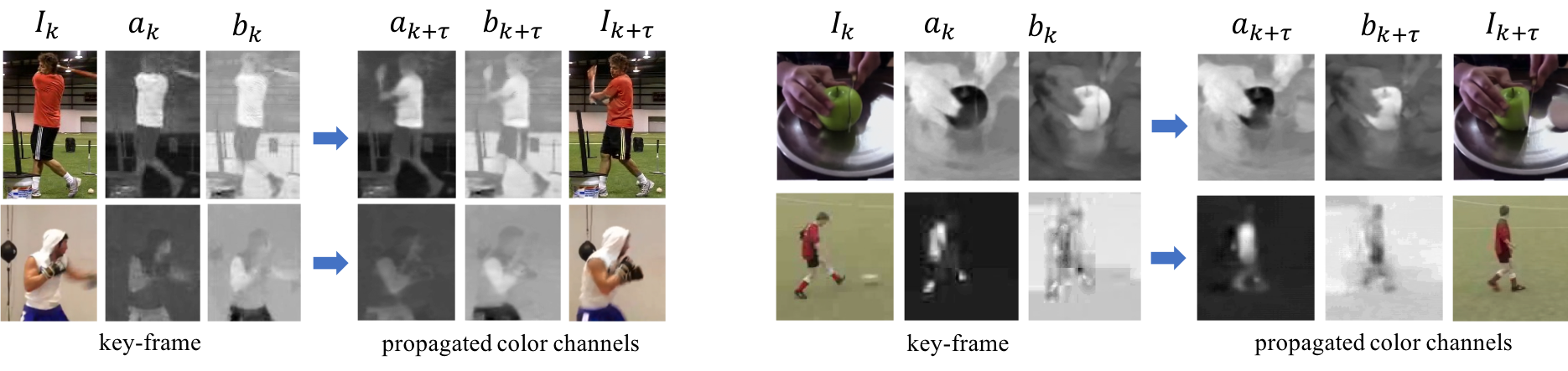}
	\caption{\footnotesize We show two groups of color transitions output through a basic TPN. For each column, the left size is key-frames with the ground truth color images provided, and the right side is new frames propagated from the left, correspondingly.
		$\left\lbrace a_k, b_k\right\rbrace $ and $\left\lbrace a_{k+\tau}, b_{k+\tau}\right\rbrace $ are the input and output of the TPN.
		All four examples show obvious appearance transitions caused by movement of objects.
		Our TPN successfully transfers those color maps in (a) to (b), and generates colored frames $I_{k+\tau}$. Zoom-in to see details.
	}
	\label{figure:transition}\vspace{-5mm}
\end{figure}

\begin{table}[t]
	\centering
	\footnotesize
	\caption{\footnotesize Run-times of different methods. We set $K=30$ for VPN color propagation~\cite{Jampani17VPN} to calculate its run-time. The last four columns are our methods.}
	\begin{tabular}{c|c|c|c||c|c|c|c}
		\hline
		 Method &  VPN~\cite{Jampani17VPN} (color) &  VPN~\cite{Jampani17VPN} (seg) &  HDRCNN\cite{Eilertsen17HDR} &  color  &  HDR &  SEG(t)  &  SEG(t+s) \\
		\hline
		 (ms)  &  730  & 750 &  365    &  15    &  25 & 17 & 84 \\ 
		\hline
	\end{tabular}\vspace{-5mm}
	\label{tab:runtime}
\end{table}

\section{Experimental Results}\vspace{-2mm}
\label{sec:exp}
In this section, we present our experimental results for propagating color channels, HDR images, and segmentation mask across videos.
We assume that the transition between frames is relatively small and can be modeled by image diffusion.
We note that propagating information across relatively longer temporal intervals may not satisfy the assumptions of a diffusion model, especially when new objects or scenes appear. 
Hence, for color and HDR propagation, instead of considering such complex scenarios, we set ``key-frames'' at regular fixed intervals for both tasks. 
That is, the ground truth color or HDR information is provided for every $K$ frames and propagated to all frames in between them.
This is a practical strategy for real-world applications. 
Note that for video segmentation mask propagation, we still follow the protocol of the DAVIS dataset~\cite{Perazzi2016} and only use the mask from the first frame.
%
%
\vspace{-1em}
\subsection{General Network Settings and Run-times}\vspace{-2mm}
We use a guidance network and a propagation module similar to~\cite{Liu17SPN}, with two cascaded propagation units.
%
%
For computational and memory efficiency, the propagation is implemented with a smaller resolution, where $U$ is downsampled from the original input space to a hidden layer before being fed into the propagation module.
The hidden layer is then bi-linearly upsampled to the original size of the image.
%
%
We adopt a symmetric U-net shaped, light-weight deep CNN with skip links for all tasks, but with different numbers of layers (see Fig.~\ref{figure:stpn} as an example for color propagation, and the appendix for detailed specification).
For both color and segmentation propagation, we first pre-train the model on synthesized frame pairs generated from the MS-COCO dataset~\cite{lin2014microsoft}.
Given an image, we augment it in two different ways via a similarity transform with uniformly sampled parameters from
$s\in[0.9,1.1], \theta\in[-15^{\circ},15^{\circ}]$ and $dx\in[-0.1, 0.1]\times b$, where $b=\min(H, W)$.
We also apply this data augmentation while training with patches from video sequences in the following stage to increase the variation of the training samples.
We present the run-times for different methods on an $512\times512$ image using a single TITAN X (Pascal) NVIDIA GPU (without cuDNN) in Table~\ref{tab:runtime}.

\begin{figure}[t]
	\centering
	\includegraphics[width=0.95\linewidth]{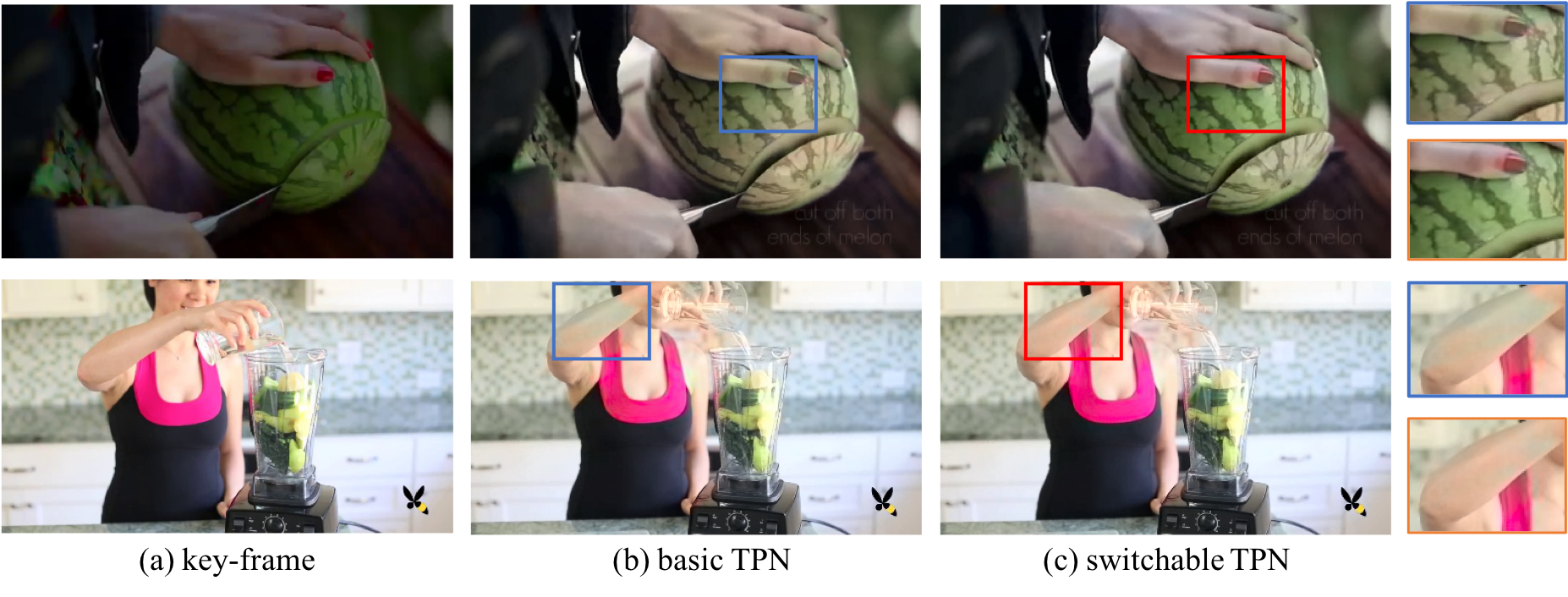}
	\caption{\footnotesize An example of color propagation from a key-frame (a) to a new frame with considerably large appearance transitions, using either (b) the basic TPN or (c) the switchable TPN. The closeups show the detailed comparison. Zoom-in to see details.
	}
	\vspace{-2em}
	\label{figure:stylepair}
\end{figure}

\vspace{-2mm}
\subsection{Color Propagation in Videos}\vspace{-2mm}
\label{sec:color}
Jampani et al. previously perform color propagation in videos using the video propagation network (VPN)~\cite{Jampani17VPN} through a bilateral network, and validate it on the DAVIS-16 dataset~\cite{Perazzi2016}.
However, since the DAVIS-16  training set is small (50 videos) and the sequences usually contain large transitions, we instead use the ACT dataset~\cite{Wang_Transformation}, which contains $7260$ training sequences with about 600K frames in total of various categories of actions.
All the sequences are short with small camera or scene transitions, and thus are more suitable for the proposed task.
We re-train and evaluate the VPN network on the ACT dataset for a fair comparison.
The original testing set contains $3974$ sequences with more than 300K frames.
For faster processing, we randomly select five videos from every action category in order to maintain the prior distribution of the original ACT dataset.  
We use one for testing and the remaining four for training.

We perform all computations in the \textit{CIE-Lab} color space.
After pretrained on the MS-COCO dataset, we fine-tune the models on the ACT dataset by randomly selecting two frames from a sequence and cropping both frames at the same spatial location to generate a single training sample.
Specifically, our TPN takes as input the concatenated \textit{ab} channels that are randomly cropped to $256\times 256$ from a key-frame.
The patch are then transformed to $64\times 64 \times 32$ via 2 convoluitional layers with $stride=2$ before being fed into the propagation module. 
After propagation, the output maps are upsampled 
and outputs a transformed \textit{ab} image map for the frames following the key-frame. 
The guidance CNN takes as input a pair of lightness images ($L$) for the two frames.
We optimize the Euclidean loss (in the \textit{ab} color space) between the ground truth and the propagated color channels generated by our network.
Note that for the switchable TPN, we have two losses with different weights according to~\eqref{eq:loss}.
During testing, we combine the estimated \textit{ab} channels with the given \textit{L} channel to generate a color RGB image.
All our evaluation metrics are computed in the \textit{RGB} color space.
More details are in the appendix.

\begin{table}[h]
	\centering
	\vspace{-10mm}
    \caption{\footnotesize RMSE and PSNR (in parentheses) for video color propagation on the ACT dataset for different key-frame interval $K$.
    We compared the results of VPN with $K=30$.}
    \footnotesize
    \begin{tabular}{c|cccc|cccc}
        \hline
         eval & \multicolumn{4}{c}{RMSE} & \multicolumn{4}{c}{PSNR} \\
        \hline
         Interval  &  $K=10$ &  $K=20$ &  $K=30$ &  $K=40$ &  $K=10$ &  $K=20$ &  $K=30$ &  $K=40$ \\
        \hline
         BTPNim+BTPNvd  &  4.43    &  5.46    &  6.04    &  6.44  &  36.65 &  35.22 &  34.46 &  33.96 \\ 
        \hline
         BTPNim+STPNvd &  4.00    &  5.00    &  5.58    &  6.01 &  37.63 &  36.09 &  35.26 &  34.70 \\ 
        \hline
         STPNim+STPNvd &  \textbf{3.98}    &  \textbf{4.97}    &  \textbf{5.55} &  \textbf{5.99} &  \textbf{37.64} &  \textbf{36.12} &  \textbf{35.29} &  \textbf{34.73} \\ \hline
 		 VPN (stage-1)~\cite{Jampani17VPN}  &    -    &  -    &  6.86 &  - &  -    &  -    &  32.86 &  - \\
        \hline
    \end{tabular}\vspace{-5mm}
	\label{tab:keyframe}
\end{table}

We compare the models with three combinations. We refer to the basic and switchable TPN networks as  BTPN and STPN, respectively. The methods that we compare include:
(a) BTPN on MS-COCO + BTPN on ACT, denoted by BTPNim+BTPNvd;
(b) BTPN on MS-COCO + STPN on ACT, denoted by BTPNim+STPNvd;
(c) STPN on MS-COCO + STPN on ACT, denoted by STPNim+STPNvd;
%
and evaluate different key-frames intervals, including $K=\left\lbrace10, 20, 30, 40\right\rbrace$.
The quantitative results for root mean square error (RMSE) and peak signal-to-noise ratio (PSNR) are presented in Table~\ref{tab:keyframe}.
Two trends can be inferred from the results.
First, the switchable TPN consistently outperforms the basic TPN and the VPN~\cite{Jampani17VPN}, and using the switchable TPN structure for both the pre-training and fine-tuning stages generates the best results.
Second, while the errors decrease drastically on reducing time intervals between adjacent key-frames, the colorized video maintains overall high-quality even when $K$ is set close to a common frame rate (\eg, 25 to 30 fps).
We also show in Fig.~\ref{figure:stylepair} (b) and (c) that the switchable structure significantly improves the qualitative results by preserving the saturation of color, especially when there are large transitions between the generated images and their corresponding key-frames.
The TPN also maintains good colorization for fairly long video sequences, which is evident from a comparison of the colorized video frames with the ground truth in Fig.~\ref{figure:colorVideo}. Over longer time intervals, the quality of the switchable TPN degrades much more gracefully than that of the basic TPN and the VPN~\cite{Jampani17VPN}.

\begin{figure}[t]
       \centering
       \includegraphics[width=0.99\linewidth]{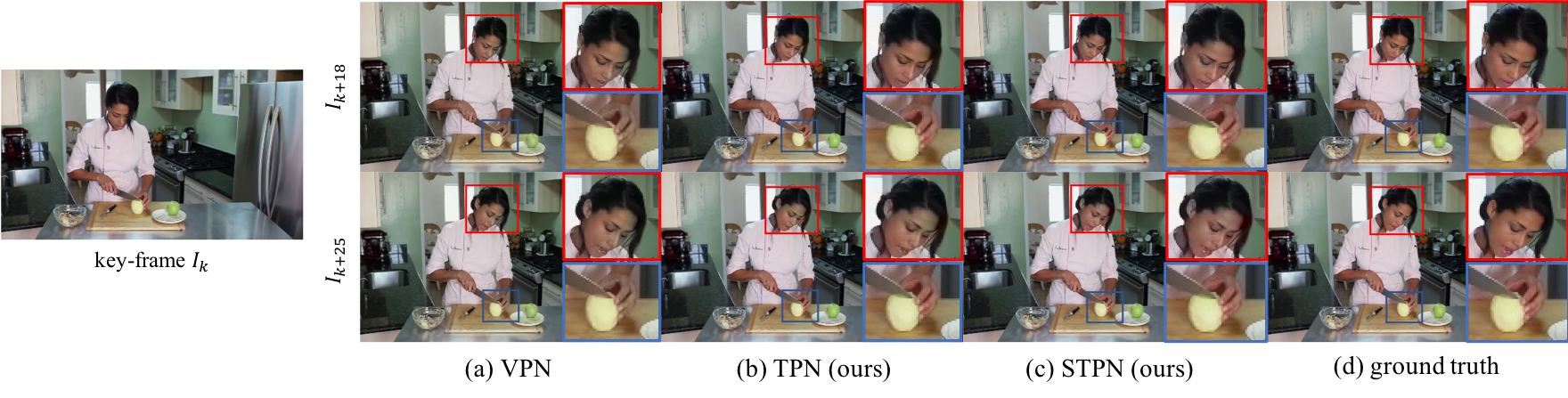}
       \caption{\footnotesize  Results using propagation of color from a key-frame to two proceeding frames (the $18^{th}$ and the $25^{th}$) at different time intervals with the basic/switchable TPN, and the VPN~\cite{Jampani17VPN} models. While both method in (b) and (c) produce better color propagation results, the STPN in (c) preserves better style consistency with longer ($k+25$ frames) time interval than the other two models. Zoom-in to see details.
       }
       \label{figure:colorVideo}\vspace{-7mm}
\end{figure}


\vspace{-2mm}
\subsection{HDR Propagation in Videos}
\label{sec:hdr}\vspace{-2mm}
%
Our method can utilize the ``key-frame'' to capture an HDR video, by shooting only one HDR image for every $K$ frames and processing the remaining $K-1$ frames with the TPN quickly and in real-time using guidance from their corresponding LDR images. 
The HDR key-frames can be generated by the existing cameras at a much lower frame-rate, by capturing multiple bracketed exposure shots and by combined them via computationally expensive image fusion techniques, which do not run in real-time.
Compared to the two alternative HDR capture methods, including (a) switching to the photo mode every $N$ frames via low-level camera control API or (b) alternating the exposure time for a few frames to obtain HDR~\cite{HDRVideo}, our method are more practical and low-cost.
%
%
We compare our method against the work of~\cite{Eilertsen17HDR}, which directly reconstructs the HDR frames given the corresponding LDR frames as inputs.
While this is not an apples-to-apples comparison because we also use an HDR key-frame as input, the work~\cite{Eilertsen17HDR} is the closest related state-of-the-art method to our approach for HDR reconstruction.
To our knowledge, no prior work exists on propagating HDR information in videos using deep learning and ours is the first work to address this problem.

We use similar network architecture as color propagation except that $U$ is transformed to $128\times 128 \times 16$ via one Conv layer to preserve more image details.
In addition, we also use a two-stage training procedure by first pre-training the network with randomly augmented pairs of patches created from a dataset of HDR images, and then fine-tuning on an HDR video dataset.
We collect the majority of the publicly available HDR image and video datasets listed in the appendix, and utilize all the HDR images and every 10-th frame of the HDR videos for training in the first stage~\cite{Eilertsen17HDR}. 
Except for the four videos (the same as~\cite{Eilertsen17HDR}) that we use for testing, we train our TPN with all the collected videos.
We evaluate our method on the four videos that~\cite{Eilertsen17HDR} used for testing and compare against their method. 

To deal with the long-tail, skewed distribution of pixel values in HDR images, similarly to~\cite{Eilertsen17HDR}, we use the logarithmic space for HDR training with $U=\log(H+\varepsilon)$, where $H$ denotes an HDR image and $\varepsilon$ is set to $0.01$.
Since the image irradiance values recorded in HDR images vary significantly across cameras, naively merging different datasets together often generates domain differences in the training samples.
To resolve this issue, before merging the datasets acquired by different cameras, we subtract from each input image the mean value of its corresponding dataset.
We use the same data augmentation as in~\cite{Eilertsen17HDR} of varying exposure values and camera curves~\cite{camera04} during training.

\begin{figure}[t]
       \centering
       \includegraphics[width=0.99\linewidth]{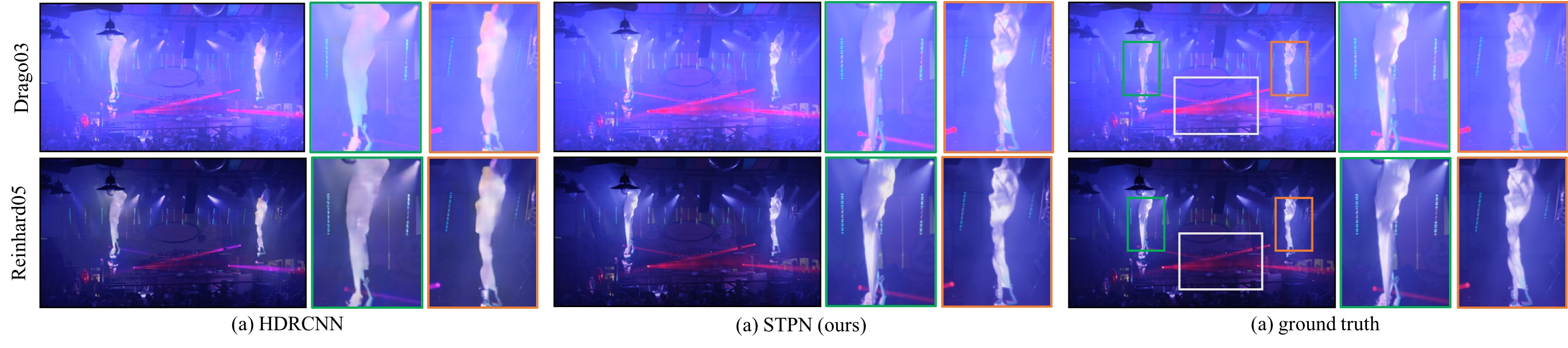}
       \caption{\footnotesize Results of HDR Video Propagation. We show one HDR frame ($\tau=19$ frames away from the key
       frame) reconstructed 
       with our switchable TPN (middle row). The top row shows the ground truth HDR, and the bottom row
       shows the output of HDRCNN~\cite{Eilertsen17HDR}. The HDR images are displayed with two popular 
       tone mapping algorithms, Drago03~\cite{Drago03} and Reinhard05~\cite{Reinhard05}. The insets show
       that the switchable TPN can effectively propagate the HDR information to new frames and preserve 
       the dynamic range of scene details.
   Zoom-in to see details.} 
       \label{fig:result_hdr}
\end{figure} 

During testing, we follow a similar approach to~\cite{Eilertsen17HDR}, which blends the inverse HDR image created from the input LDR image with the HDR image predicted by our TPN network to obtain the final output HDR image.
To do so, we assume a fixed camera response curve $g$ same as the one used in~\cite{Eilertsen17HDR}.
We also found a threshold $T$, with which the top $5\%$ of the pixels in the input key-frame are saturated. Given the camera response curve $g$ and the threshold $T$,  we then revert the input LDR to an HDR image (except for the saturated pixels). This inverted HDR image is blended linearly with the HDR prediction from our TPN as the final HDR output. 
More details are presented in the appendix.

\begin{table*}[t]
	\centering
	\footnotesize
	\vspace{-5mm}
	\caption{\footnotesize RMSE for video HDR propagation for the TPN output, the output with LDR blending, and for different intervals for the key-frames $K$. Reconstruction from single LDR~\cite{Eilertsen17HDR} is compared under the same experimental settings.}
	\footnotesize
	\begin{tabular}{c|cccc|cccc}
		\hline
		{ settings} & \multicolumn{4}{c|}{ HDR with blending} & \multicolumn{4}{c}{ HDR without blending} \\
		\hline
		 Interval  &  $K=10$ &  $K=20$ &  $K=30$ &  $K=40$ & $K=10$ &  $K=20$ &  $K=30$ &  $K=40$\\\hline
		 BTPNim+BTPNvd  &  $0.031$ &  $0.034$ &  $0.038$ &  $0.042$ & $0.119$ &  $0.160$ &  $0.216$ &  $0.244$\\
		 BTPNim+BTPNvd  &  $0.028$ &  $0.031$ &  $0.034$ &  $0.038$ & \bf{0.096} &  \bf{0.115} &  $0.146$ &  \bf{0.156}\\
		 BTPNim+BTPNvd  &  \bf{0.027} &  \bf{0.030} &  $0.034$ &  \bf{0.037} & $0.098$ &  $0.121$ &  \bf{0.142} &  $0.159$\\\hline
		 HDRCNN~\cite{Eilertsen17HDR}  & \multicolumn{4}{c|}{ 0.038} & \multicolumn{4}{c}{ 0.480} \\\hline
	\end{tabular}\vspace{-1em}
	\label{tab:hdr}
\end{table*}
%
%

%
%
We compare the RMSE of the generated HDR frames for different intervals between the key-frames, with or without the blending of LDR information with the HDR image generated by the TPN in Table~\ref{tab:hdr}.
Our results indicate that the switchable TPN can also significantly improve the results for HDR propagation compared to the basic TPN.
We also compare with the frame-wise reconstruction method~\cite{Eilertsen17HDR}, with and without the blending-based post-processing in Fig.~\ref{fig:result_hdr}. As shown, our TPN recovers HDR images with up to $K=30$ frames away from each key frame. The reconstructed HDR images preserve the same scene details as the ground truth, under different tone mapping algorithms. 
More results are presented in the appendix.
As noted earlier, since we have additional HDR key-frames as input, it is not an apples-to-apples comparison with single-image based HDR methods like~\cite{Eilertsen17HDR}. 
Nevertheless, the results in Fig.~\ref{fig:result_hdr} show the feasibility of using sparsely-sampled HDR key-frames to reconstruct HDR videos from LDR videos with the proposed TPN approach.

\vspace{-2mm}
\subsection{Segmentation Mask Propagation in Videos}\vspace{-2mm}
In addition, we conduct video segmentation on the DAVIS dataset~\cite{Perazzi2016} with the same settings as VPN~\cite{Jampani17VPN}, to show that the proposed method can also generated to semantic-level propagation in videos.
We note that maintaining style consistency does not apply to semantic segmentation.
For each frame to be predicted, we use the segmentation mask of the first frame as the only key-frame, while using the corresponding RGB images as the input to the guidance network.
We train two versions of the basic TPN network for this task:
(a) A basic TPN with the input/output resolution reduced to $256\times256$, the $U$ transformed to $64\times 64 \times 16$, in the same manner as the color propagation model. 
We used the same guidance network architecture as \cite{Liu17SPN}, while removing the last convolution unit to fit the dimensions of the propagation module.
This model, denoted as SEG(t) in Table~\ref{tab:runtime}, is much more efficient than the majority of the recent video segmentation methods~\cite{Jampani17VPN,Perazzi2016,Cae+17}.
(b) A more accurate model with an SPN~\cite{Liu17SPN} refinement applied to the output of the basic TPN, denoted as SEG(t+s). This model utilizes the same architecture as~\cite{Liu17SPN}, except that it replaces the loss with Sigmoid cross entropy for the per-pixel classification task.
Similar to color and HDR propagation, We pretrain (a) on the MS-COCO dataset and then finetune it on the DAVIS training set.
For the SPN model in (b), we first train it on the VOC image segmentation task as described in~\cite{Liu17SPN}. 
We treat each class in an image as binary mask in order to transfer the original model to a two-class classification model, while replacing the corresponding loss module.
We then finetune the SPN on the coarse masks from the DAVIS training set, which are produced by an intermediate model -- the pre-trained version of (a) from the MS-COCO dataset.
More details are introduced in the supplementary materiel.

\begin{table}[t]
	\centering
	\caption{\footnotesize 
	%
	Comparisons for video segmentation on the DAVIS dataset.}
	\footnotesize
	\vspace{2mm}
	\begin{tabular}{c|c||c|c|c|c||c|c}
		\hline
		\multicolumn{4}{c|}{J-mean} & \multicolumn{4}{c}{F-mean} \\
		\hline
		   VPN~\cite{Jampani17VPN} & OSVOS~\cite{Cae+17} &  SEG(t) &  SEG(t+s) & VPN~\cite{Jampani17VPN} & OSVOS~\cite{Cae+17} &  SEG(t) &  SEG(t+s)\\
		\hline
		   70.2 & 79.8  &  71.1 &  76.19  &  65.5 & 80.6  & 75.65 &  73.53\\ 
		\hline
	\end{tabular}
	\vspace{-2em}
	\label{tab:davis}
\end{table}

We compare our method to VPN~\cite{Jampani17VPN} and one recent state-of-the-art method~\cite{Cae+17}.
Both VPN and our method rely purely on the propagation module from the first frame and does not utilize any image segmentation pre-trained modules (in contrast with \cite{Cae+17}).
Similar to the other two tasks, both models significantly outperform VPN~\cite{Jampani17VPN} for video segmentation propagation (see Table~\ref{tab:davis}), while all running one order of magnitude faster (see Table~\ref{tab:runtime}).
The SEG(t+s) model performs comparatively to the OSVOS~\cite{Cae+17} method, which utilizes the pretrained image segmentation model and requires a much long inference time (7800 ms).
%
%
%

%

\section{Conclusions}
\label{sec:conclude}
We propose a switchable temporal propagation network for propagating image properties, such as colors, HDR luminance information, and segmentation masks through videos. 
To this end, we develop a method for temporal propagation that builds on image diffusion, but learns the pixel and high-level semantic affinities between images for a given a task based on a set of training data. 
We show that style regularization is needed, and that it can be enforced through bi-directional training, leading to our switchable temporal propagation network. 
We demonstrate the effectiveness of our approach on video colorization, LDR to HDR conversion and video segmentation.

The switchable TPN provides a general method for propagating information over time in videos. In the future, we will explore how to incorporate mid-level and high-level vision cues, such as detection, tracking, semantic/instance segmentation, for temporal propagation. 

\clearpage
\section*{Appendix}

\section*{Implementation Details}
We provide additional implementation details for all the video propagation tasks that we considered in the paper. All the source codes, models and dataset will be released to the public.
\paragraph{\bf Color propagation.}
For color propagation considered in the paper, we train both the basic and switchable TPN networks, for 10 epochs, on the COCO~\cite{lin2014microsoft} dataset, and then fine-tune them on the ACT~\cite{Wang_Transformation} dataset for 4 epochs. 
For both networks we keep the number of training epochs the same to ensure a fair comparison.
To pre-train our networks on the COCO image-based dataset, we sample two sets of similarity transformation parameters from $s\in[0.9,1.1], \theta\in[-15,15]$ and $dx\in[-0.1, 0.1]\times b$, where $b=\min(H, W)$ and use them to create two geometrically transformed versions of each training image.
Then we randomly crop $256\times 256$ sized patches from the same location of the two transformed images.
While fine-tuning the models on the ACT video-based dataset, we skip the step of applying geometric transformations to the frames and simply generate a pair of $256\times 256$ sized patches by randomly cropping from the same location of two different video frames.
Additionally, we randomly sample the pairs of video frames to use together, on-the-fly, during training.

\paragraph{\bf HDR propagation.}
The HDR images that we collect contain 664 images and 2477 sampled video frames in total.
We increase the images to about $100K$ pairs of $256\times 256$ sized patches by randomly augmenting each image with different parameters for the similarity transform and cropping patches (the cropping location is the same, while the similarity transform is different for a pair of samples).
The HDR video training set contains 24773 video frames (see Table~\ref{tab:hdrvideo}). We augment them and generate $100K$ pairs for offline training by randomly selecting two frames from a sequence as one training example.
We do not apply any geometric transformation to video frames in order to maintain the original motion information of the video, but simply crop them at the same location.

Following the notations of~\cite{Eilertsen17HDR}, our final HDR images (\textit{HDR with blending} in Table 3) are constructed by a simple pixel-wise blending method:
\begin{eqnarray}
&H(i) = f^{-1}\left( D(i)\right) ;\quad \{i|\hat{H}(i)<T\} \\\nonumber
&H(i) = \exp\left(y(i)\right) +\varepsilon;\quad \{i|\hat{H}(i)\geq T\} \\\nonumber
\label{eq:blending}
\end{eqnarray}
where $D$ is the LDR image, $\hat{H}(i)=\exp\left( y(i)\right) +\varepsilon$ is the estimated HDR image produced by the network without blending, $T$ is the threshold computed for the key-frames, which removes $5\%$ of the brightest pixels from it and is fixed for the all frames to which the information from the key frame is propagated, 
and $f(\cdot)$ is the camera curve proposed in~\cite{camera04}.
Note that the blending is not used during the training phase.
Instead of designing a soft blending mask as in~\cite{Eilertsen17HDR}, we directly replace the pixels with the revised LDR images according to the threshold $T$.

\paragraph{\bf Network architectures.}
Our guidance network has a symmetric U-net shaped structure, where both the input and output have the same size.
Since for the segmentation propagation in videos, the architecture follows that of the \cite{Liu17SPN} except that the top-most layer is removed in order to adapt to the lower resolution,
we only introduce the architectures w.r.t the applications of color and HDR propagation.
The down-sampling part of the network has seven consecutive conv+relu+max-pooling (with stride of 2) layers.
Starting from $8$, each layer has double the number of channels, resulting in $4\times 4\times 512$ feature maps at the bottleneck.
In order to use the information at different levels of image resolution, we add skipped-links by summing features maps of the same size from the corresponding down- and up-sampling layers.
The down- and up-sampling parts have symmetric configurations, except that for the up-sampling part the max-pooling layers are replaced by bilinear up-sampling layers. 
The last layer (the back-end of the propagation module) has 2 channels (\textit{ab} color channels) for color propagation, and 3 (\textit{RGB}) channels for HDR propagation.
The only difference between color and HDR propagation is the dimensionality of the propagation layer, as introduced in the paper.
Note that the network (both the CNN and the propagation part) is not restricted to be of the input image's size during the testing phase.
See Fig. \ref{figure:intro} in the paper as an example.

\vspace{-2mm}
\section*{Datasets and Additional Results}
We present more details of our combined HDR dataset and additional results.
\paragraph{\bf HDR datasets used in the paper.}
Since the availability of HDR images and videos is rare, we collected many publicly available HDR image and video datasets, which are listed in the Table~\ref{tab:hdr_image} and Table~\ref{tab:hdrvideo}.
\begin{table*}[h]
	\centering
	\vspace{-5mm}
	\caption{ HDR image datasets used in the paper.}
	\scriptsize
	\begin{tabular}{c|c|c|c}
		\hline
		\multicolumn{4}{c}{HDR image datasets} \\\hline
		name & source & training & testing \\\hline
		Deep HDR~\cite{Kalantari17HDR} & \url{http://cseweb.ucsd.edu/~viscomp/projects/SIG17HDR/} & 74 & 0 \\\hline
		ETHyma & \url{ftp://ftp.ivc.polytech.univ-nantes.fr/ETHyma/Images\textunderscore HDR/} & 11 & 0 \\\hline
		gward & \url{http://www.anyhere.com/gward/hdrenc/pages/originals.html} & 33 & 0 \\\hline
		hdrStanford & \url{http://scarlet.stanford.edu/~brian/hdr/hdrStanfordData.zip} & 88 & 0 \\\hline
		mpi-inf~\cite{cadik2011evaluation} & \url{http://resources.mpi-inf.mpg.de/hdr/gallery.html} & 13 & 0 \\\hline
		hdrlabs & \url{http://www.hdrlabs.com/sibl/archive.html} & 124 & 0 \\\hline
		Funt-HDR & \url{http://www.cs.sfu.ca/~colour/data/funt\_hdr/} & 224 & 0 \\\hline
		pfstools & \url{http://pfstools.sourceforge.net/hdr\textunderscore gallery.html} & 7 & 0 \\\hline
		hdr-eye~\cite{SundstedtVAE} & \url{https://mmspg.epfl.ch/hdr-eye} & 46 & 0 \\\hline
		Fairchild & \url{http://rit-mcsl.org/fairchild/HDRPS/HDRthumbs.html} & 104 & 0 \\\hline
		ESPL-LIVE~\cite{7944695} & \url{http://signal.ece.utexas.edu/~debarati/HDRDatabase.zip} & 150 & 0 \\\hline
	\end{tabular}
	\label{tab:hdr_image}\vspace{-4mm}
\end{table*}
\begin{table*}[h]
	\centering
	\vspace{-4mm}
	\caption{ HDR video datasets used in the paper.}
	\begin{tabular}{c|c|c|c}
		\hline
		\multicolumn{4}{c}{HDR video datasets (number of sequences/frames)} \\\hline
		name & source & training & testing \\\hline
		Boitard & \url{https://people.irisa.fr/Ronan.Boitard/} & 7/1915 & 0/0 \\\hline
		Stuttgart & \url{https://hdr-2014.hdm-stuttgart.de/} & 30/16208 & 3/1761 \\\hline
		MPI & \url{http://resources.mpi-inf.mpg.de/hdr/video/} & 2/1462 & 0/0 \\\hline
		DML-HDR & \url{http://dml.ece.ubc.ca/data/DML-HDR/} & 5/3006 & 0/0 \\\hline
		hdrv~\cite{Eilertsen17HDR} & \url{http://hdrv.org/} & 9/2182 & 1/401 \\\hline 
	\end{tabular}
	\label{tab:hdrvideo}\vspace{-4mm}
\end{table*}

\paragraph{\bf A new self-collected HDR video dataset.}
In addition, we also develop a new HDR video dataset containing diverse indoor/outdoor scenes. 
The dataset contains 17 videos with about $13K$ frames.
We test the HDRCNN~\cite{Eilertsen17HDR} as well as our switchable TPN on all the 17 videos using the same settings and evaluation criteria as in the paper.
Note that no video is used for training or finetuning the models in this paper.
We will provide more videos which supply as training data for various HDR related tasks.
All the data will be made available to the public.
\begin{table*}[h]
		\centering
		\vspace{-10mm}
	\caption{ RMSE for video HDR propagation for the TPN output as well as the output with LDR blending on our self-collected dataset. We fix the interval of key-frame $K=20$. Reconstruction from single LDR~\cite{Eilertsen17HDR} is compared under the same experimental settings.}
	\begin{tabular}{c|c|c}
		\hline
		methods & HDR with blending & HDR without blending \\\hline
		HDRCNN~\cite{Debevec97HDR} & 0.013 & 0.403 \\\hline
		switchable TPN $K=20$ & 0.008 & 0.039 \\\hline
	\end{tabular}	\vspace{-10mm}
\end{table*}

\paragraph{\bf Online vs. offline propagation of color frames.}
Different from the application of HDR frame propagation which is designed as a prototype for HDR video camera, color propagation can be used for simplifying video colorization and editing.
Therefore, the processing of video colorization can be offline, i.e., any frame to be colorized can be produced by utilizing the information from two key-frames -- the one that is preceding or following it, instead of a single one before it.
In this part, we provide the performance of a simply way to generate offline propagation results:
the propagation result of each frame is produced by selecting the nearest key-frame out of the two key-frames.
Such method provides visually smoother qualitative results for video color propagation when videos can be processed offline.
\begin{table}[h]
	\centering
	\vspace{-5mm}
	\caption{ RMSE and PSNR (in parentheses) for \textbf{offline} video color propagation on the ACT dataset for different key-frame spacing $K$.
	}
	\begin{tabular}{c|cccc|cccc}
		\hline
		eval & \multicolumn{4}{c}{RMSE} & \multicolumn{4}{c}{PSNR} \\
		\hline
		 Interval  &  $K=10$ &  $K=20$ &  $K=30$ &  $K=40$ &  $K=10$ &  $K=20$ &  $K=30$ &  $K=40$ \\
		\hline
		 BTPNim+BTPNvd   &  3.48    &  4.26    &  4.77    &  5.11  &  38.50 &  37.06 &  36.29 &  35.76 \\ 
		\hline
		 BTPNim +STPNvd&  3.03   &  3.80    &  4.29    &  4.63  &  39.78 &  38.16 &  37.31 &  36.71 \\ 
		\hline
		 STPNim +STPNvd&  \textbf{3.02}    &  \textbf{3.79}    &  \textbf{4.27} &  \textbf{4.60}  &  \textbf{39.78} &  \textbf{38.16} &  \textbf{37.31} &  \textbf{36.72} \\ \hline
	\end{tabular}
	\label{tab:keyframe2}	\vspace{-5mm}
\end{table}

\clearpage
\paragraph{\bf Color transitions.}
We show more examples of color propagation between one key-frame, and a random proceeding frame, which has a significantly different appearance from the key-frame ($\tau\in\left[ 1,29\right] $) in Fig.~\ref{figure:transition2}.
\begin{figure*}[h]
	\centering
	\vspace{-7mm}
	\includegraphics[width=0.82\linewidth]{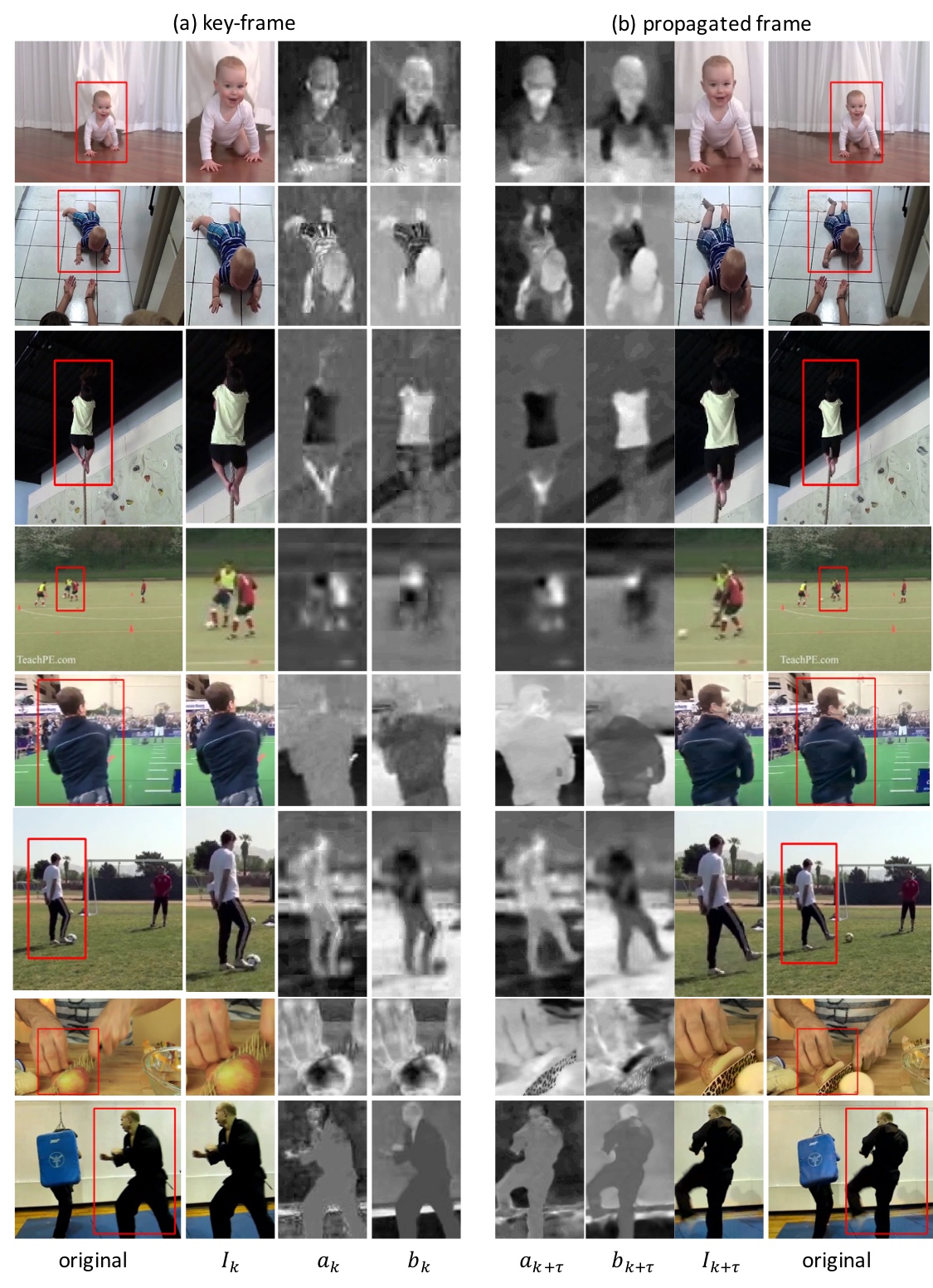}
	\caption{\footnotesize We show results of color propagation by the basic TPN where
		(a) shows key-frames with the ground truth color images provided, and (b) shows a new color frame produced by propagating the color from (a), correspondingly.
		$\left\lbrace a_k, b_k\right\rbrace $ and $\left\lbrace a_{k+\tau}, b_{k+\tau}\right\rbrace $ are the inputs and outputs of the TPN, respectively.
		All examples show obvious appearance transitions caused by the movement of objects between the frames.
		Our TPN successfully transfers the color maps in (a) to (b), and produces colored frames $I_{k+\tau}$.
	}
	\label{figure:transition2}\vspace{-9mm}
\end{figure*}

\clearpage
\paragraph{\bf Basic TPN vs. switchable TPN for color propagation.}
We show more examples of comparisons between the basic and switchable TPN architectures in Fig.~\ref{figure:stylepair2}.
\begin{figure*}[h]
	\centering
	\vspace{-7mm}
	\includegraphics[width=0.82\linewidth]{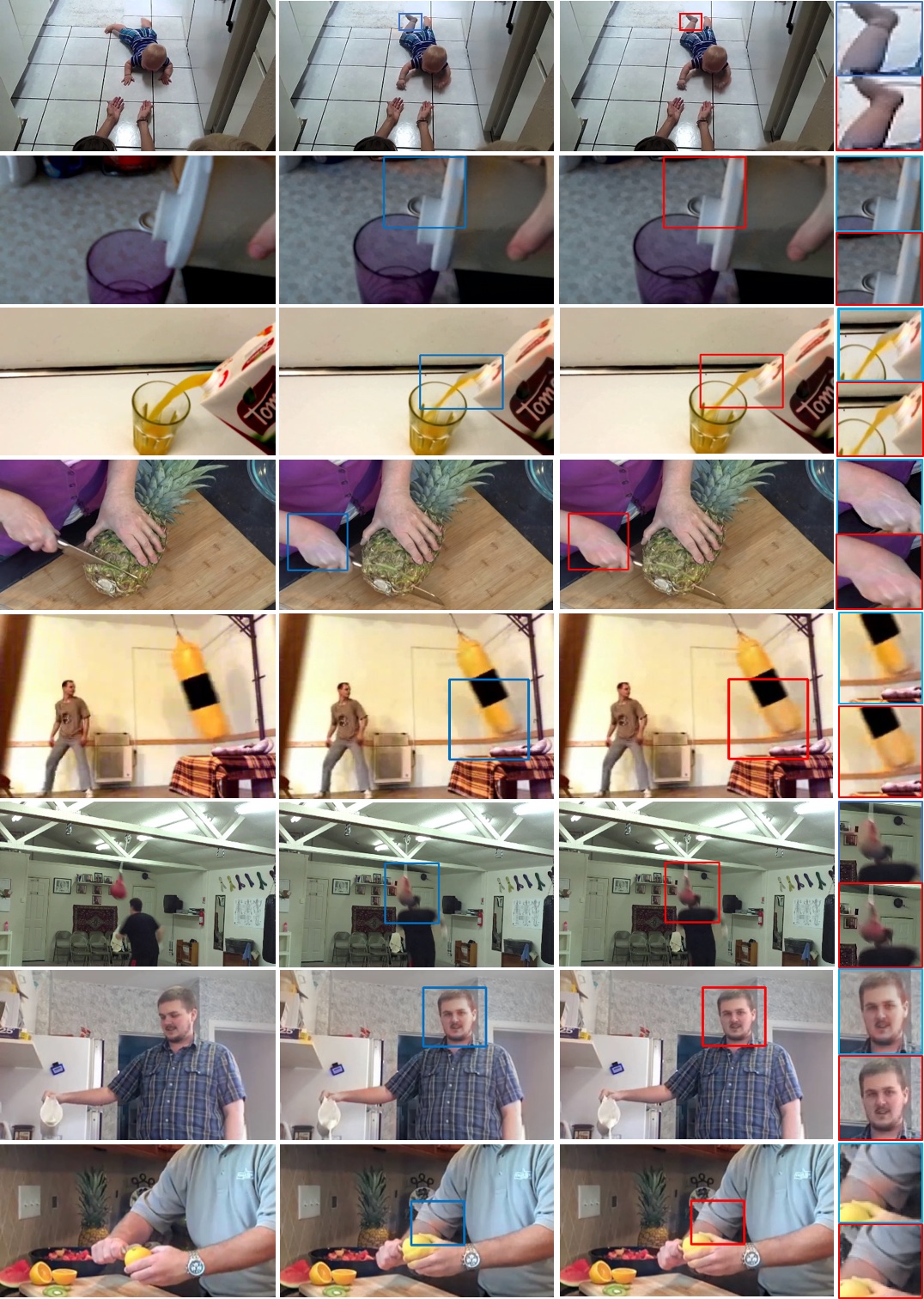}\vspace{3mm}
	\caption{We show the propagation of color from a key-frame (first column) to a new frame, including the result produced by (second column) basic TPN and (third column) switchable TPN. 
		Details are shown in the pairs of close-up images on the right and can be observed by zooming in.
	}
	\label{figure:stylepair2}\vspace{-9mm}
\end{figure*}

\clearpage
\paragraph{\bf Switchable TPN vs. VPN~\cite{Jampani17VPN} for color propagation.}
We show more examples of the comparison between the switchable TPN and the video propagation network (VPN)~\cite{Jampani17VPN} in Fig.~\ref{figure:STPNvsVPN1} and Fig.~\ref{figure:STPNvsVPN2}.
\vspace{-2mm}
\begin{figure}[h]
	\centering
	\includegraphics[width=0.99\linewidth]{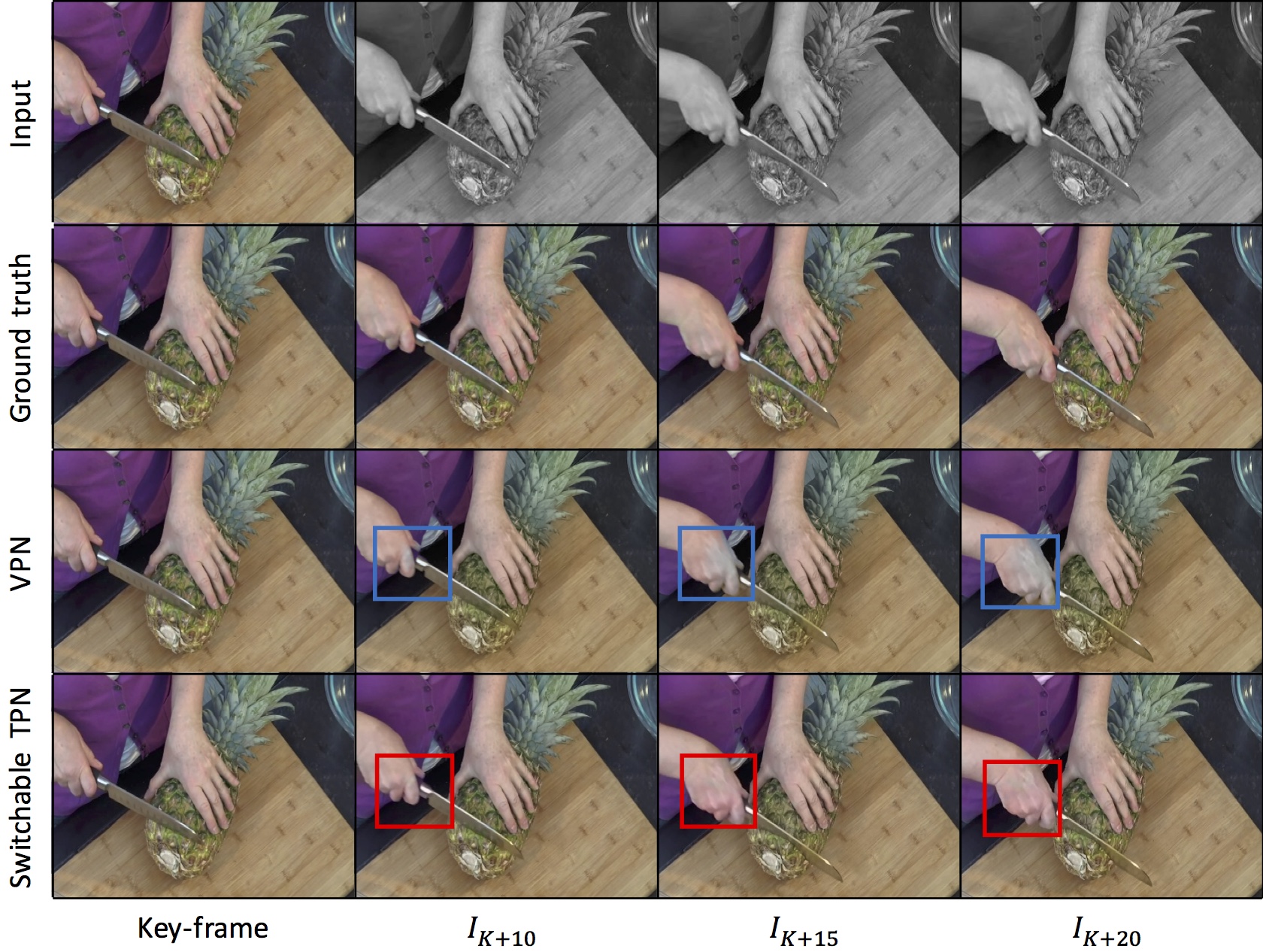}
	\caption{ \footnotesize We show the propagation of color from a key-frame (first column) to several proceeding frames (columns 2-4), including the result produced by VPN~\cite{Jampani17VPN} in the third row and our switchable TPN (fourth row). The first row shows the colored key-frame and the subsequent grayscale frames that are input to the network and the second row contains the ground truth colored frames. Details are highlighted in the red and blue boxes and can best viewed by zooming in.
	}
	\label{figure:STPNvsVPN1}
\end{figure}

\vspace{-2mm}
\begin{figure}[h]
	\centering
	\includegraphics[width=0.99\linewidth]{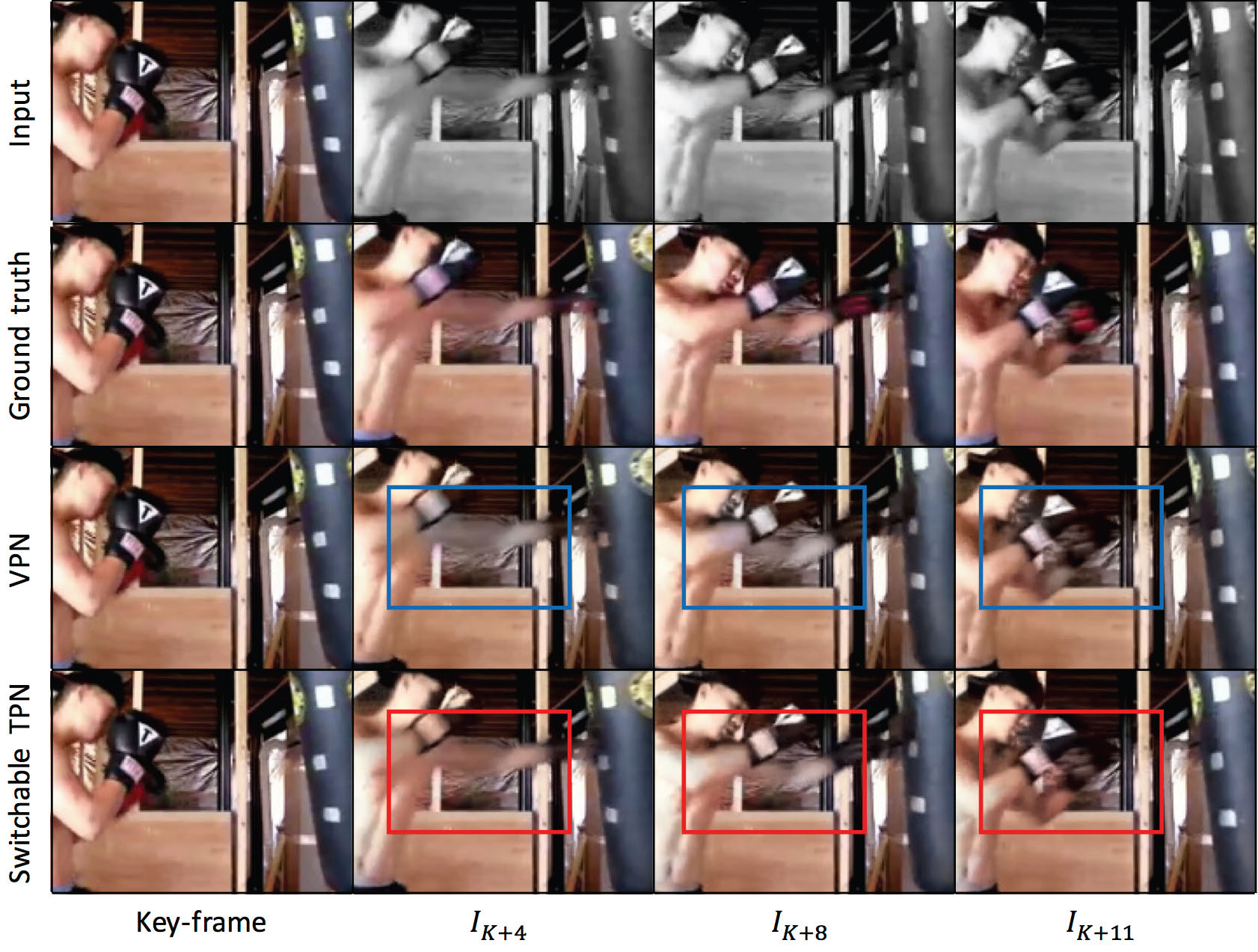}
	\caption{ \footnotesize We show the propagation of color from a key-frame (first column) to several proceeding frames (columns 2-4), including the result produced by VPN~\cite{Jampani17VPN} in the third row and our switchable TPN (fourth row). The first row shows the colored key-frame and the subsequent grayscale frames that are input to the network and the second row contains the ground truth colored frames. Details are highlighted in the red and blue boxes and can best viewed by zooming in.
	}
	\label{figure:STPNvsVPN2}
\end{figure}

\clearpage
\paragraph{\bf Switchable TPN vs. HDRCNN~\cite{Eilertsen17HDR} for HDR propagation.}
We show more examples of the comparison between the single LDR image-based HDR reconstruction approach~\cite{Eilertsen17HDR} and our method.
We keep the key-frame interval $K=20$ and show our results with relatively large time intervals $\tau$, where the frames to which the HDR information is propagated have obvious transitions from the key-frame.
Specifically, we use relatively low exposure and fix gamma to 2.2 to produce tone-mapped LDR images, in order to show the details recovered in the saturated regions.
Note that other details in the images can also be viewed by adjusting the exposure value to produce different tone-mapped LDR images (\eg, the video corresponding to Fig.~\ref{figure:hdr2}, which is tone-mapped using the \textit{localtonemap} function in MATLAB to show rich details, is embedded in the video file, which we supply with the appendix).
\begin{figure}[h]
	\centering
	\includegraphics[width=0.92\linewidth]{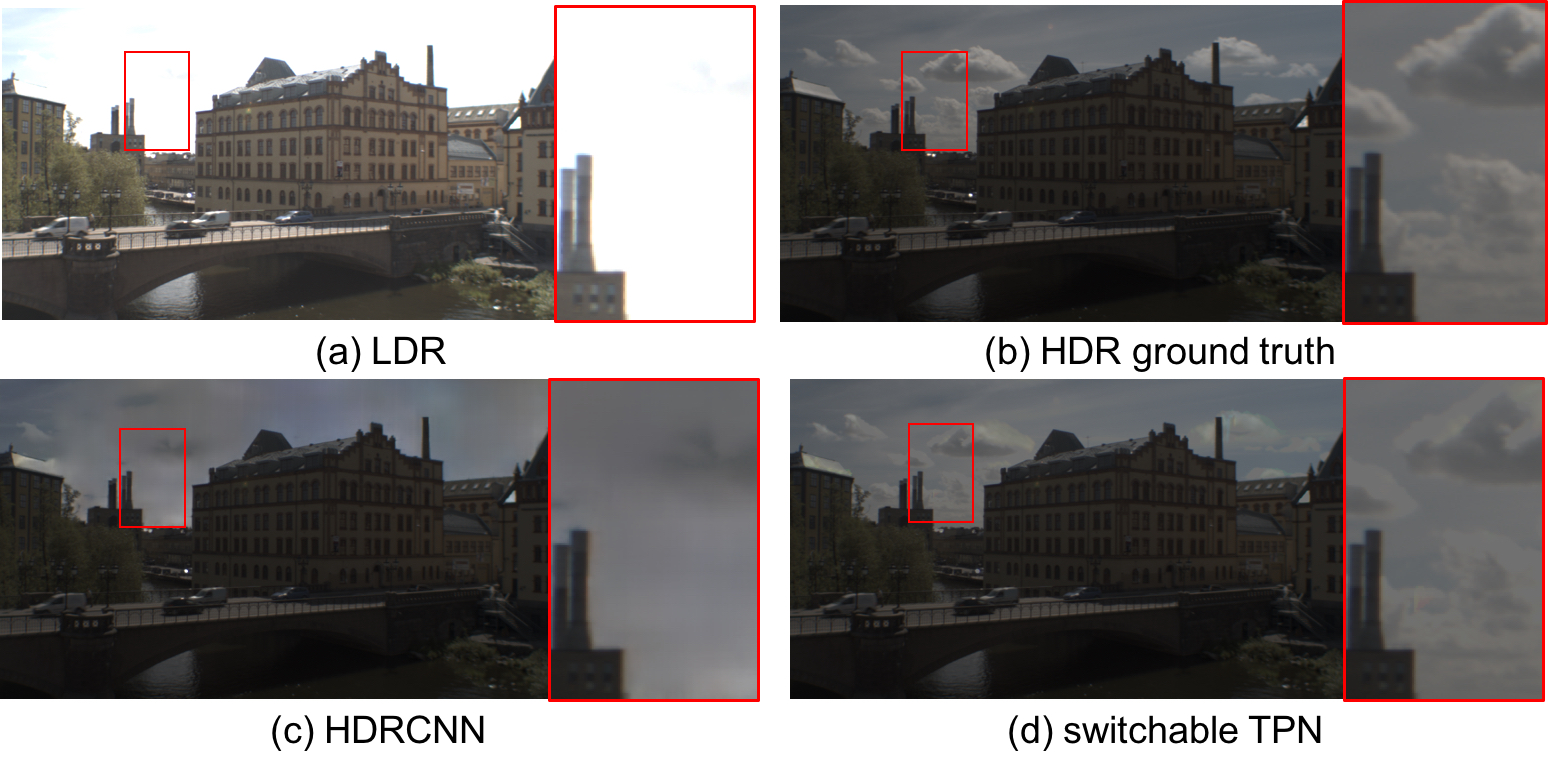}
	\caption{ \footnotesize We show the propagation of HDR from a key-frame (a) to a proceeding frame with $\tau=10$.
		This video has large and rapid camera transitions over time.
		While the HDRCNN~\cite{Eilertsen17HDR} in (c) loses many details, the switchable TPN in (d) can successfully track the details in the saturated region (see (a)).
	}
	\label{figure:hdr1}
\end{figure}
\vspace{-10mm}
\begin{figure}[h]
	\centering
	\includegraphics[width=0.92\linewidth]{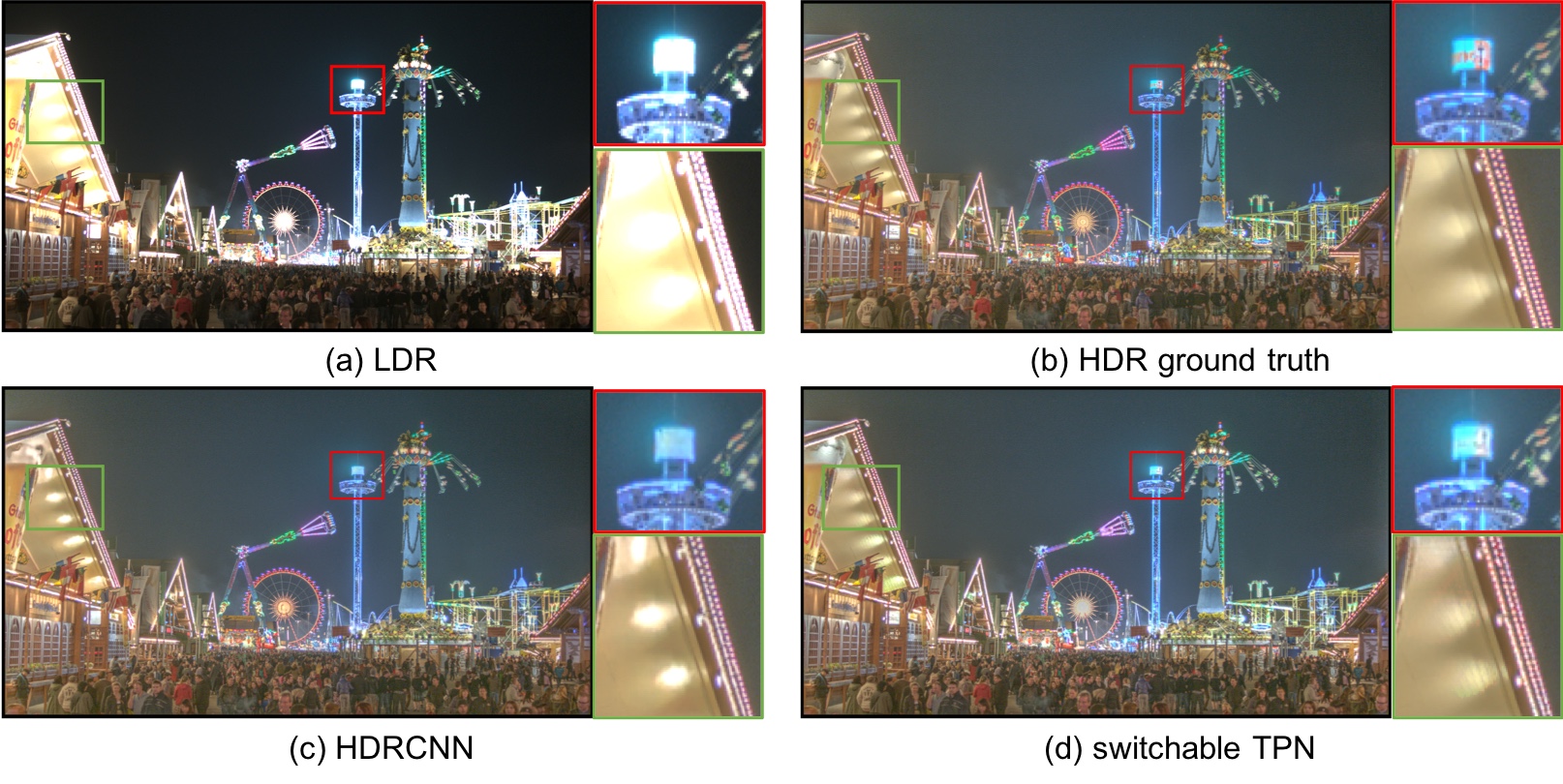}
	\caption{ \footnotesize We show the propagation of HDR from a key-frame (a) to a proceeding frame with $\tau=10$.
		This video contains rapid variations in the lamp regions (red), and saturated still regions (green).
		The switchable TPN in (d) can recover the details in the lamp region even with rapid motions. Also it is helpful to maintain the HDR information in the still region for the whole sequence.
	}
	\label{figure:hdr2}
\end{figure}

\clearpage

\bibliographystyle{plain}

\end{document}